\newtheorem{theorem}{Theorem}
\newtheorem{remark}{Remark}
\begin{document}

\title{Latent Diffusion Model Based Denoising Receiver for 6G Semantic Communication: From Stochastic Differential Theory to Application}
\author{
Xiucheng Wang,~\IEEEmembership{Graduate Student Member,~IEEE,}
Honggang Jia,~\IEEEmembership{Graduate Student Member,~IEEE,}
Nan Cheng,~\IEEEmembership{Senior Member,~IEEE,}

\thanks{
\par This work was supported by the National Key Research and Development Program of China (2024YFB907500).
\par Xiucheng Wang, Honggang Jia, and Nan Chengare with the State Key Laboratory of ISN and School of Telecommunications Engineering, Xidian University, Xi’an 710071, China (e-mail: \{xcwang\_1, jiahg\}@stu.xidian.edu.cn; dr.nan.cheng@ieee.org);\textit{(Corresponding author: Nan Cheng.)}. 
}     
}

\maketitle
\IEEEdisplaynontitleabstractindextext

\IEEEpeerreviewmaketitle

\begin{abstract}
In this paper, a novel semantic communication framework empowered by generative artificial intelligence (GAI) is proposed, to enhance robustness against both channel noise and transmission data distribution shifts. A theoretical foundation is established using stochastic differential equations (SDEs), from which a closed-form mapping between any signal-to-noise ratio (SNR) and the optimal denoising timestep is derived. Moreover, to address distribution mismatch, a mathematical scaling method is introduced to align received semantic features with the GAI training distribution. Built on this theoretical foundation, a latent diffusion model (LDM)-based semantic communication framework is proposed to combine a variational autoencoder for semantic features extraction, where a pretrained diffusion model is used for denoising. The proposed system is a training-free framework that supports zero-shot generalization, and achieves superior performance under low-SNR and out-of-distribution conditions, offering a scalable and robust solution for future 6G semantic communication systems. Experimental results demonstrate that the proposed semantic communication framework achieves state-of-the-art performance in both pixel-level accuracy and semantic perceptual quality, consistently outperforming baselines across a wide range of SNRs and data distributions without any fine-tuning or post-training.
\end{abstract}

\begin{IEEEkeywords}
Semantic communication, generative artificial intelligence, diffusion model, stochastic differential equations, noise erasing.
\end{IEEEkeywords}

\section{Introduction}
The research of sixth-generation (6G) wireless communication has shown a growing paradigm shift from conventional bit-level transmission toward semantic-level information exchange \cite{dang2020should}. Unlike traditional systems that aim to reconstruct raw data, semantic communication focuses on extracting and transmitting the intended meaning of the data to support task execution \cite{zhang2024intellicise}. This evolution is driven by increasing demands for intelligent, efficient, and context-aware communication in emerging applications. For example, in extended reality (XR), user experience depends more on spatial semantics than pixel fidelity \cite{shen2023toward}; in industrial IoT (IIoT), only a small portion of sensor data is decision-critical \cite{sisinni2018industrial}; and digital twin systems require semantic-level synchronization between physical and virtual entities \cite{10623528}. However, current systems based on Shannon’s theory focus on reliable symbol delivery and lack the capability to assess contextual or task relevance \cite{qin2021semantic}. Therefore, both academia and industry communities are actively exploring semantic communication frameworks that transcend the Shannon paradigm by prioritizing meaningful content over syntactic precision \cite{jiang2021road}.

Recent advances in deep learning have led to the proliferation of neural network-based semantic communication frameworks, where the end-to-end system is typically implemented as an autoencoder architecture comprising a neural encoder, a noisy channel, and a neural decoder \cite{gunduz2024joint}. These systems demonstrate impressive performance in compressing and reconstructing semantic features, enabling tasks such as image transmission with drastically reduced bit rates \cite{xin2024semantic}. However, despite the promising capabilities of neural network-based semantic communication systems, they exhibit significant limitations in terms of robustness and generalization. One of the most critical challenges arises from their acute sensitivity to signal-to-noise ratio (SNR) variations \cite{islam2024deep}. Specifically, the semantic features produced by the encoder are susceptible to corruption by additive noise introduced by the wireless channel, resulting in the received representations at the decoder diverging markedly from those observed during training. This leads to significant performance degradation under low-SNR or unexpected channel conditions \cite{vapnik1998statistical}. Fundamentally, this challenge arises from a distribution mismatch between training and inference inputs, a phenomenon commonly referred to as the out-of-distribution (OOD) problem \cite{goodfellow2016deep}. The issue is further exacerbated by the intrinsic properties of latent semantic representations. Unlike explicit modalities such as images or audio, latent features are highly abstract, compactly encoded, and often reside in non-Euclidean manifolds \cite{goodfellow2016deep}. This compressed structure renders them particularly fragile to perturbations. Even slight Gaussian noise can induce significant distributional shifts that compromise the underlying semantic structure. Consequently, neural decoders struggle to accurately reconstruct the original content, as they fail to extract meaningful patterns from perturbed latent vectors \cite{vapnik1998statistical}. This fragility of the latent space poses a huge challenge to semantic communication systems, namely, how to effectively model, diagnose, and mitigate noise-induced performance degradation in compressed semantic domains to improve the fidelity and reliability of the semantic feature decoding. 

Furthermore, current neural semantic systems are often trained on data from a particular distribution, limiting their scalability to broader domains. When the distribution of transmitted data changes, even slightly, from the distribution of cats to the distribution of dogs, the performance of the system may deteriorate substantially, necessitating full retraining \cite{zhang2024intellicise}. Such dependence on narrowly scoped training regimes not only impairs scalability but also introduces a prohibitive computational burden when deployed in dynamic or heterogeneous environments. Consequently, the lack of noise resilience, data generalization, and model adaptability remains a fundamental bottleneck that hinders the widespread application of neural network-based semantic communication in practical 6G scenarios \cite{kim2018communication}.

The above limitations arise from a fundamental mismatch between the nature of semantic communication and the predominantly discriminative architectures used in current neural solutions \cite{goodfellow2016deep}. Most systems adopt auto-encoder backbones, training neural networks to extract semantic features under matched training and inference conditions. Although these models perform adequately on in-distribution data, they deteriorate sharply when the channel is highly noisy or when test statistics deviate from the training distribution \cite{bourtsoulatze2019deep}. From an information-theoretic perspective, the joint source–channel coding should leverage the data’s prior distribution to achieve superior compression and robustness \cite{gunduz2008joint}. However, discriminative networks are ill-equipped to learn full data priors. This motivates a shift toward generative modelling, which, in principle, captures the underlying distribution and enables maximum a posteriori recovery from corrupted latent features. Recent attempts to incorporate generative models have relied on conditional diffusion or GAN architectures \cite{croitoru2023diffusion,xin2024semantic}, yet these approaches inject artificial Gaussian noise during training, which creates a mismatch with real channels, and still depend on discriminative conditional encoders, leaving them vulnerable to OOD inputs. Consequently, a fully generative framework that eliminates conditional bottlenecks and directly exploits learned priors is essential for robust denoising and semantic reconstruction across diverse channel conditions.

To address the aforementioned challenges, in this paper, we propose a novel semantic communication architecture based on latent diffusion models (LDMs) \cite{LDM}. The core idea is to harness the generative power of LDMs for semantic-level denoising in the latent space of a variational autoencoder (VAE) \cite{kingma2013auto}. Rather than operating in the high-dimensional signal domain, the proposed method encodes semantic information into a compact latent vector, transmits it over a noisy channel, and applies a pretrained LDM at the receiver to iteratively remove Gaussian noise via reverse diffusion. This enables robust semantic recovery under severe SNR degradation without retraining. The framework is theoretically grounded in stochastic differential equation (SDE) theory, modeling diffusion as a continuous-time Markov process \cite{song2020score,huang2024simultaneous}. A closed-form mapping between the SNR and the diffusion timestep is derived, allowing the receiver to adaptively select the number of denoising steps based on channel conditions. To mitigate the OOD issue from mismatched input statistics, a linear scaling mechanism is introduced to align received features with the training distribution of the LDM. This dual adaptation strategy ensures strong generalization across diverse noise levels. Since the LDM operates independently of the encoder–decoder pipeline and serves solely as a denoiser, pretrained models from vision tasks can be seamlessly integrated without fine-tuning. This plug-and-play capability reduces system complexity and supports continual performance enhancement as more advanced generative models emerge. The main contributions of this paper are summarized as follows.
\begin{enumerate}
    \item Based on SDE, a novel semantic communication theory is proposed, which models the relationship between channel-induced Gaussian noise and the reverse diffusion process using stochastic differential equations. A closed-form mapping between the SNR and the optimal denoising timestep is derived, enabling adaptive noise suppression without retraining, and addressing the long-standing challenge of OOD robustness in semantic communication.
    \item Built on the proposed semantic communication theory, an LDM-based framework is presented, which integrates a VAE for semantic compression, a scaling mechanism for distribution alignment, and a pretrained LDM for denoising in latent space. As the LDM operates independently of the encoder–decoder pair, it supports seamless substitution with a wide range of pretrained diffusion models, significantly simplifying system deployment and upgrade.
    \item Experiment results show the proposed semantic communication framework achieves SOTA performance in both pixel-level accuracy and semantic perceptual quality, across a wide range of SNR and transmission data distribution without any fine-tuning or post-training.
\end{enumerate}

\begin{figure*}
    \centering
    \includegraphics[width=1\linewidth]{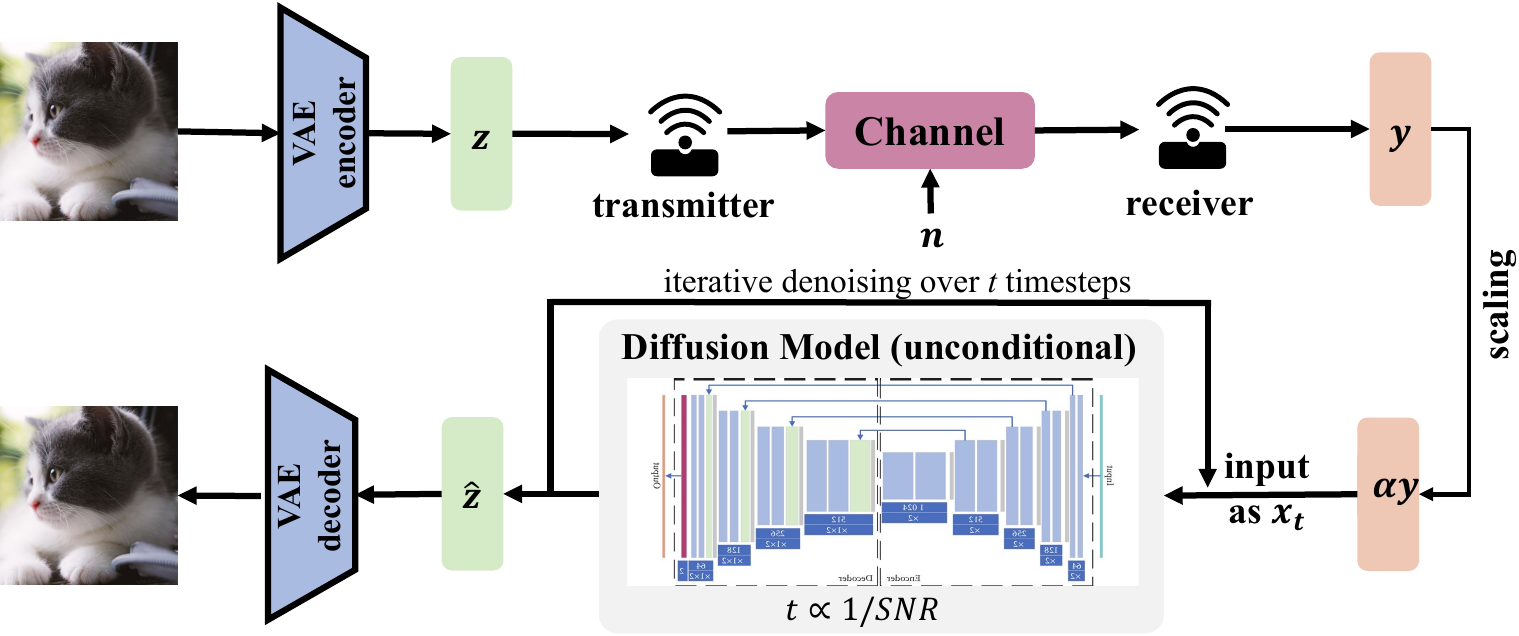}
    \caption{Illustration of the proposed semantic communication framework.}
    \label{fig-system}
\end{figure*}

\section{Preliminaries and Related Works}
\subsection{Semantic Communications}
Semantic communication has emerged as a transformative paradigm in the evolution of next-generation wireless systems, aiming to bridge the gap between signal transmission and meaning understanding. Unlike conventional Shannon-centric systems that prioritize bit-level fidelity, semantic communication seeks to effectively transmit the intended meaning of messages, thus aligning the communication process with the task-specific utility at the receiver. This paradigm shift has triggered intensive theoretical and practical exploration in recent years, resulting in a rich body of literature.

Foundational works such as \cite{qin2021semantic} and \cite{9679803} established the conceptual framework for semantic communication, delineating the limitations of Shannon theory in capturing semantic and effectiveness layers of communication, and advocating for AI-integrated approaches. Subsequent surveys, including  \cite{10319671} and \cite{xin2024semantic}, have categorized enabling technologies such as knowledge-based reasoning, deep learning-based encoders, and task-oriented transmission frameworks. These reviews have also formalized key metrics, including semantic entropy, semantic rate-distortion, and semantic channel capacity, marking a theoretical extension beyond classical information theory.

In parallel, significant efforts have been made toward system-level innovations. In \cite{nguyen2025contemporary} and \cite{wang2025semantic}, the authors review emerging architectures that integrate semantic communication with deep joint source-channel coding (JSCC), generative AI, and federated learning. These contributions underline the role of large language models (LLMs) and semantic knowledge bases (KBs) in enhancing interpretability, compressibility, and adaptability of semantic encoders across modalities such as text, image, audio, and video. The work by Lema et al. \cite{zhang2024semantic} highlights the convergence of semantic communication with edge computing and proposes scalable architectures for real-time, context-aware applications.

A number of domain-specific studies have investigated semantic communication across critical verticals. For instance, Ye et al. examine its integration into vehicular networks, focusing on multimodal semantic extraction, cooperative coding, and dynamic resource allocation \cite{ye2025survey}. Similarly, other works explore semantic transmission in metaverse platforms, Internet of Things (IoT), wireless sensor networks (WSN), and healthcare, emphasizing the ability to reduce bandwidth consumption and improve robustness through semantic compression and prediction. Despite these advances, several open challenges remain. Theoretical gaps persist in defining a unified framework for semantic information quantification and in establishing universal semantic performance metrics. Additionally, the issues of dynamic KB synchronization, adversarial robustness, interpretability of deep semantic models, and semantic-level security threats have become critical research frontiers. Recent studies, includes \cite{ma2025modeling}, have introduced performance modeling techniques, such as the Alpha-Beta-Gamma (ABG) expression, to bridge empirical deep learning performance with classical SNR-based communication analysis.

\subsection{Diffusion Model}
Diffusion models are a class of generative models that produce data samples by iteratively reversing a predefined noise-injection process. Originally proposed as an alternative to generative adversarial networks (GANs) \cite{creswell2018generative}, diffusion models have demonstrated remarkable performance in diverse generative tasks, including image synthesis, natural language modeling, and structured data generation \cite{lin2023diffbir, austin2021structured}. Beyond generation, they have also shown promise in perception-oriented applications such as image segmentation, object detection, and model-based reinforcement learning \cite{LDM,wang2024radiodiff,wang2025radiodiffinverse}. At their core, diffusion models define a two-stage process: a forward diffusion stage that incrementally perturbs input data with Gaussian noise, and a reverse denoising stage that reconstructs the data through a learned Markov process. Let $\bm{x}_0$ denote the clean input. The forward process generates latent variables $\bm{x}_1, \ldots, \bm{x}_T$ via a Markov chain with Gaussian transitions given by
\begin{align}
q\left(\bm{x}_t \mid \bm{x}_{t-1}\right) = \mathcal{N}\left(\sqrt{1 - \beta_t} \bm{x}_{t-1}, \beta_t \bm{I}\right),
\end{align}
where $\mathcal{N}(\bm{\mu},\sigma^2\bm{I})$, is the Gaussian distribution with the mean if $\bm{\mu}$ and covariance of $\sigma\bm{I}$, $\bm{I}$ is an identical matrix, and $\beta_t \in (0,1)$ controls the noise variance at step $t$. By defining $a_t = 1 - \beta_t$ and $\bar{a}_t = \prod_{s=1}^t a_s$, one obtains the closed-form marginal as follows.
\begin{align}
q(\bm{x}_t \mid \bm{x}_0) = \mathcal{N}\left(\sqrt{\bar{a}_t} \bm{x}_0, (1 - \bar{a}_t) \bm{I}\right),
\end{align}
Applying the sampling rule, the following equation can be obtained.
\begin{align}
\bm{x}_t = \sqrt{\bar{a}_t} \bm{x}_0 + \sqrt{1 - \bar{a}_t} \bm{\epsilon}, \quad \bm{\epsilon} \sim \mathcal{N}(\bm{0}, \bm{I}).
\end{align}
The reverse process approximates the true posterior using a parameterized model $p_{\bm{\theta}}(\bm{x}_{t-1} \mid \bm{x}_t)$, defined as follows.
\begin{align}
p_{\bm{\theta}}(\bm{x}_{t-1} \mid \bm{x}_t) = \mathcal{N}\left(\bm{\mu}_{\bm{\theta}}(\bm{x}_t, t), \beta_t \bm{I}\right),
\end{align}
where $\bm{\mu}_{\bm{\theta}}$ is a neural network trained to predict the denoised signal at each step. Sampling proceeds from $\bm{x}_T \sim \mathcal{N}(\bm{0}, \bm{I})$ down to $\bm{x}_0$, with each denoising step computed as follows.
\begin{align}
&\bm{x}_{t-1} = \frac{1}{\sqrt{a_t}} \left( \bm{x}_t - \frac{1 - a_t}{\sqrt{1 - \bar{\alpha}t}} \bm{\mu}_{\bm{\theta}}(\bm{x}_t, t) \right) + \beta_t \tilde{\bm{\epsilon}},\\
&\tilde{\bm{\epsilon}} \sim \mathcal{N}(\bm{0}, \bm{I})
\end{align}
where $\tilde{\bm{\epsilon}}$ is the Gaussian noise added in the reverse process. The final noise term ensures that the variance of the denoised sample aligns with that of the forward process. While a more precise formulation would scale this term using $\tilde{\beta}_t = \frac{1 - \bar{a}_{t-1}}{1 - \bar{a}_t} \beta_t$, empirical studies \cite{ho2020denoising} have shown that using $\beta_t$ directly provides a favorable trade-off between computational efficiency and denoising performance.

\section{System Model and Problem Formulation}
We consider a semantic communication system that transmits high-level representations of source data through a noisy wireless channel, with the goal of reconstructing the intended semantic content at the receiver. Unlike traditional symbol-based communication systems, the focus here lies in end-to-end semantic fidelity rather than exact bit-level recovery.

The end-to-end system comprises a source, a joint source–channel encoder, a fading channel with additive white Gaussian noise (AWGN), a denoising module, and a decoder. Let $\bm{x} \in \mathbb{R}^n$ denote the original source data. The source encoder, parameterized by $\bm{\phi}$, maps $\bm{x}$ into a compact semantic latent vector $\bm{z} \in \mathbb{R}^d$, with $d \ll n$, via
\begin{align}
\bm{z} = f_{\bm{\phi}}(\bm{x}),
\end{align}
where $f_{\bm{\phi}}: \mathbb{R}^n \to \mathbb{R}^d$ is typically realized by a variational autoencoder (VAE). The semantic latent vector $\bm{z}$ is transmitted over a fading channel with channel, yielding the received representation
\begin{align}
\tilde{\bm{z}} = \eta\bm{z} + \bm{n}, \quad \bm{n} \sim \mathcal{N}(\bm{0}, \sigma^2 \bm{I}),
\end{align}
where $\eta$ is the signal power attenuation factor caused by channel fading, and the $\sigma^2$ denotes the channel noise power\footnote{Although the analysis in this paper assumes linear channel fading with additive Gaussian noise, \cite{10607932} demonstrates that a VAE–ADMM pre-processing architecture can transform data corrupted by any non-Gaussian noise into an equivalent representation consisting of amplitude fading followed by zero-mean Gaussian noise. Consequently, the channel model adopted here remains applicable to wireless links whose noise statistics deviate from Gaussian assumptions.}.

At the receiver, the corrupted latent vector $\tilde{\bm{z}}$ undergoes denoising to mitigate the impact of channel noise. Let $g_{\bm{\psi}}: \mathbb{R}^d \to \mathbb{R}^d$ represent the denoising function parameterized by $\bm{\psi}$, and $h_{\bm{\theta}}: \mathbb{R}^d \to \mathbb{R}^n$ denote the semantic decoder parameterized by $\bm{\theta}$. The final reconstructed data $\hat{\bm{x}}$ is obtained as
\begin{align}
\hat{\bm{x}} = h_{\bm{\theta}} \left( g_{\bm{\psi}}(\tilde{\bm{z}}) \right).
\end{align}
In conventional systems, $g_{\bm{\psi}}$ may be a simple denoising autoencoder trained under a fixed SNR regime. However, such systems are inherently sensitive to variations in noise and data distribution, often resulting in suboptimal performance under mismatched conditions. In contrast, the proposed architecture adopts a pretrained LDM as $g_{\bm{\psi}}$, offering a generative and noise-adaptive alternative.

The design objective is to minimize the semantic distortion between the reconstructed output $\hat{\bm{x}}$ and the ground truth input $\bm{x}$. A natural choice of loss function is the mean squared error (MSE):
\begin{align}
\min_{\bm{\phi}, \bm{\psi}, \bm{\theta}} , \mathbb{E}_{\bm{x}, \bm{n}}\left[\left\| \bm{x} - h_{\bm{\theta}} \left( g_{\bm{\psi}}(\bm{z} + \bm{n}) \right) \right\|_{2}^{2} \right],
\end{align}
subject to the encoder constraint $\bm{z} = f_{\bm{\phi}}(\bm{x})$. This formulation captures the full stochasticity of the channel and reflects the end-to-end performance of the semantic transmission process. Two primary challenges arise in solving this problem. First, due to the dimensionality reduction $d \ll n$, the decoder must reconstruct a high-dimensional signal from a compressed and noise-contaminated latent representation. This makes the inverse mapping severely ill-posed and sensitive to noise perturbations. Second, the Gaussian noise introduced during transmission alters the distribution of the latent variable at the receiver, causing a significant mismatch between the training and inference distributions, commonly referred to as the OOD problem. These challenges underscore the need for a robust, distribution-aware denoising mechanism that can generalize across varying SNRs and input distributions. To address this, we propose a theoretically grounded denoising approach based on stochastic differential equations and generative diffusion models, which will be detailed in the next section.

\section{Diffusion Based Semantic Communication Framework}
\subsection{Theoretical Basis of Diffusion Model-Based Denoising}
DM enables high-fidelity data generation and denoising by simulating a stochastic process that gradually perturbs structured data into noise and then learns to reverse this process. While originally formulated as a discrete Markov chain, recent advances have shown that the diffusion process can also be interpreted as a continuous-time SDE, providing a principled framework for both theoretical analysis and practical acceleration.

Following the framework of \cite{song2020score}, the forward diffusion process can be equivalently represented as an Itô SDE of the form:
\begin{align}
d\bm{x}_t = f_t \bm{x}_t dt + g_t d\bm{w}_t, \label{eq:forward_sde}
\end{align}
where $\bm{x}_t \in \mathbb{R}^d$ denotes the latent variable at time $t \in [0, T]$, $\bm{w}_t$ is a standard Wiener process, $f_t$ is a time-dependent drift coefficient, and $g_t$ is the diffusion coefficient. For tractability, we set $g_t = 1$ and define the drift as $\bm{h}_t=f_t \bm{x}_t$, yielding the integral representation:
\begin{align}
\bm{x}_t = \bm{x}_0 + \int_0^t \bm{h}_s ds + \int_0^t d\bm{w}_s. \label{eq:forward_integral}
\end{align}
where the $\bm{x}_0 + \int_0^t \bm{h}_s d$ denotes the signal power attenuation process and the $\int_0^t d\bm{w}_s$ is the the noise increasing
process. Under this formulation, the conditional distribution of $\bm{x}_t$ given the initial state $\bm{x}_0 \sim q(\bm{x}_0)$ is Gaussian:
\begin{align}
q(\bm{x}_t \mid \bm{x}_0) = \mathcal{N} \left( \bm{x}_t ; \bm{x}_0 + \int_0^t \bm{h}_s ds,  t\bm{I} \right). \label{forward_marginal}
\end{align}
Alternatively, this can be expressed as follows
\begin{align}
\bm{x}_t = \bm{x}_0 + \int_0^t \bm{h}_s  ds + \sqrt{t} \bm{\epsilon}, \quad \bm{\epsilon} \sim \mathcal{N}(\bm{0}, \bm{I}).\label{raw-forward}
\end{align}

To recover the original clean latent representation $\bm{x}_0$ from the noisy variable $\bm{x}_t$, the reverse-time SDE is formulated as follows.
\begin{align}
d\bm{x}_t = \left[ f_t \bm{x}_t - \frac{g_t^2}{\beta_t} \bm{\epsilon}_t \right] dt + g_t d\bar{\bm{w}}_t, \label{reverse-sde}
\end{align}
where $\bar{\bm{w}}_t$ is an independent Wiener process and $\bm{\epsilon}_t$ represents the perturbation introduced during the forward process, scaled by the noise schedule $\beta_t$. Setting $f_t \bm{x}_t = \bm{h}_t$ and $g_t = 1$ yields a simplified expression of the reverse-time transition. In discrete time, this reverse process can be approximated as follows.
\begin{align}
\bm{x}_{t - \Delta t} = \bm{x}_t + \int_t^{t - \Delta t} \bm{h}_s ds - \frac{\Delta t}{t} \bm{\epsilon} + \sqrt{\frac{\Delta t(t - \Delta t)}{t}} \widetilde{\bm{\epsilon}}, \label{ddm-reverse}
\end{align}
where $\bm{\epsilon} \sim \mathcal{N}(\bm{0}, \bm{I})$ corresponds to the forward noise term and $\widetilde{\bm{\epsilon}} \sim \mathcal{N}(\bm{0}, \bm{I})$ is a newly added Gaussian noise for variance matching in the reverse step. These noise terms are statistically independent, reflecting the stochastic symmetry of the diffusion process.

From \eqref{ddm-reverse}, it is evident that the success of denoising hinges on the accurate estimation of the drift term $\bm{h}_t$ and the forward noise $\bm{\epsilon}$. In practice, modern denoising diffusion models train neural networks to predict either $\bm{h}_t$ directly (as in score-based generative modeling) or the noise term $\bm{\epsilon}$ added during the forward process. The predicted value is then used to guide the reverse sampling path from $\bm{x}_T$ to $\bm{x}_0$. The feasibility of this prediction is supported by the universal approximation theorem, which guarantees that a sufficiently expressive neural network can approximate the mappings required for effective denoising. State-of-the-art systems such as Stable Diffusion and DALL·E rely on this principle to reconstruct high-dimensional images from noise, demonstrating the practical effectiveness of neural diffusion-based inference.

Remarkably, since the $\bm{h}_t$ holds $\int_{0}^{1}\bm{h}_t +\bm{x}_0 = 0$, similar to \cite{huang2024decoupled} by setting the $\bm{h}_t = \bm{x}_0$, a simplified expression of \eqref{raw-forward} and \eqref{ddm-reverse} can be obtained as follows .
\begin{align}
    &\bm{x}_{t} = (1-t)\bm{x}_{0}+\sqrt{t} \bm{\epsilon}\label{ddm-forward},\\
    &\bm{x}_{t-\Delta t} = (1-\Delta t)\bm{x}_{t} - \frac{\Delta t}{t} \bm{\epsilon} + \sqrt{\frac{\Delta t(t - \Delta t)}{t}} \widetilde{\bm{\epsilon}}.
\end{align}


\subsection{Relationship Between Diffusion Variable and Noisy Receiving}\begin{figure}
    \centering
    \includegraphics[width=1\linewidth]{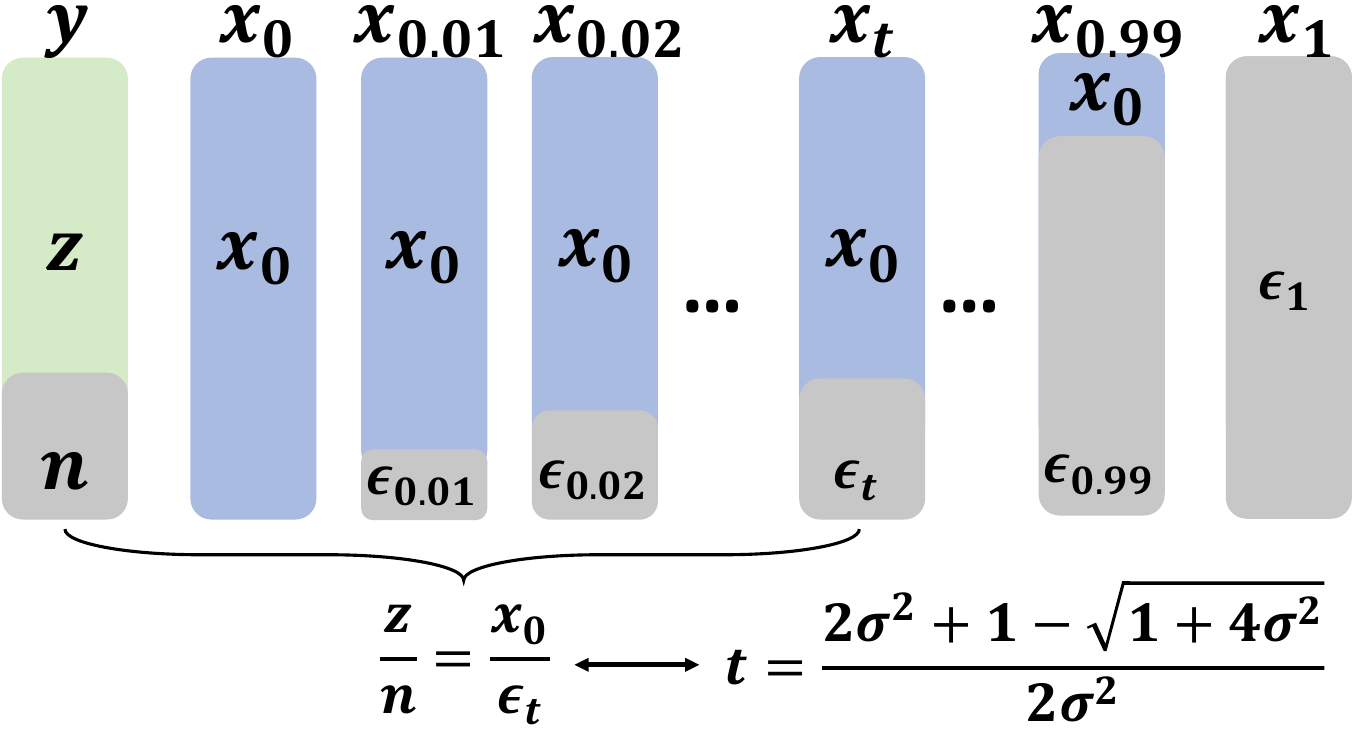}
    \caption{Illustration of the denoising timestep calculation.}
    \label{fig-t-calculation}
\end{figure}

Although DMs are predominantly used for generative tasks, their fundamental mechanism is grounded not in predicting future data states but in estimating the Gaussian noise that corrupts them. Rather than producing $\bm{x}_{t-\Delta t}$ directly from $\bm{x}_t$, the neural network within the DM is trained to infer the noise component added between these two states. This denoising-centric formulation underlies their formal designation as denoising diffusion probabilistic models\cite{ho2020denoising}, and has proven highly effective in reconstructing clean data across a range of applications. Importantly, the denoising process need not begin from pure Gaussian noise as $t = 1$. In practice, training is conducted over randomly sampled timesteps $t \in (0, 1]$, where the model learns to predict the noise corresponding to each level of corruption. As a result, DMs are inherently capable of initiating the reconstruction process from any intermediate noisy observation $\bm{x}_t$, making them flexible tools for tasks beyond generation.

From the perspective of semantic communication, this flexibility necessitates careful consideration of how the denoising timestep $t$ should be selected. As described in \eqref{ddm-forward}, the relative contributions of signal and noise in $\bm{x}_t$ are directly controlled by $t$. Larger values of $t$ correspond to higher noise content and diminished signal presence, whereas smaller $t$ values imply a cleaner input. If the timestep is mismatched to the actual corruption level, such as using a large $t$ for lightly corrupted data, useful semantic features may be lost. Conversely, initiating denoising from a small $t$ in the presence of severe noise may yield suboptimal results due to insufficient correction. To address this, we introduce an SNR-based strategy to align the timestep $t$ with the statistical characteristics of the received signal. The rationale draws from an analogy to classical detection theory: just as signal recovery aims to extract the deterministic component $\bm{z}$ from a noisy observation $\bm{y}$, diffusion denoising seeks to recover $\bm{x}_0$ from $\bm{x}_t$. Therefore, as is shown in Fig.~\ref{fig-t-calculation}, we propose that the optimal denoising performance can be achieved when the energy ratio between signal and noise in $\bm{x}_t$ is matched to that in $\bm{y}$, thereby guiding the selection of $t$ based on the estimated SNR.
The $t$ can be calculated as Theorem~\ref{theorem-1} and Remark~\ref{remark-2}.
\begin{theorem}\label{theorem-1}
    Assuming the $\mathbb{E}\left[\|\bm{x}_{0}\|^{2}\right]=\gamma$, for any given received noisy latent feature map $\bm{y}$, and noise density $\sigma^{2}$ of noisy channel, when 
    \begin{align}
    t = \frac{2+\phi-\sqrt{\phi^2+4\phi}}{2},\label{t-calculation-1}
    \end{align} 
    where $\phi=\frac{\mathbb{E}[\|y\|_{2}^{2}]-\sigma^2}{\gamma\sigma^2}$. The following equation can be obtained.
    \begin{align}
        \frac{\mathbb{E}\left[\|\eta\bm{z}\|_{2}^{2}\right]}{\mathbb{E}\left[\|\bm{n}\|_{2}^{2}\right]}=\frac{\mathbb{E}\left[\|(1-t)\bm{x}_{0}\|^{2}\right]}{\mathbb{E}\left[\|\sqrt{t}\bm{\epsilon}\|_{2}^{2}\right]}.\label{t-calculation}
    \end{align}
\end{theorem}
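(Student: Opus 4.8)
The plan is to match the signal-to-noise energy ratio of the physical received vector against that of the diffusion forward variable in \eqref{ddm-forward}, and then solve the resulting algebraic relation for $t$. First I would characterize the received energies. Since $\bm{y} = \bm{z} + \bm{n}$ with $\bm{n}$ zero-mean and statistically independent of the transmitted feature $\bm{z}$, the cross term vanishes in expectation, giving $\mathbb{E}[\|\bm{y}\|^2] = \mathbb{E}[\|\bm{z}\|^2] + \mathbb{E}[\|\bm{n}\|^2]$. Normalizing the channel noise energy to $\mathbb{E}[\|\bm{n}\|^2] = \sigma^2$ isolates the signal energy as $\mathbb{E}[\|\bm{z}\|^2] = \mathbb{E}[\|\bm{y}\|^2] - \sigma^2$, so that the received ratio reads $\mathbb{E}[\|\bm{z}\|^2]/\mathbb{E}[\|\bm{n}\|^2] = (\mathbb{E}[\|\bm{y}\|^2] - \sigma^2)/\sigma^2$.

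Next I would compute the corresponding ratio for the forward variable $\bm{x}_t = (1-t)\bm{x}_0 + \sqrt{t}\,\bm{\epsilon}$. Using $\mathbb{E}[\|\bm{x}_0\|^2] = \gamma$ together with the independence and unit-normalized energy of $\bm{\epsilon}$, the deterministic and stochastic parts carry energies $\mathbb{E}[\|(1-t)\bm{x}_0\|^2] = (1-t)^2\gamma$ and $\mathbb{E}[\|\sqrt{t}\,\bm{\epsilon}\|^2] = t$, respectively. Imposing the matching condition \eqref{t-calculation} equates the two ratios, and after dividing through by $\gamma$ this collapses to the compact relation $(1-t)^2/t = \phi$, with $\phi$ exactly as defined in the statement.

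Finally I would solve this for $t$. Clearing the denominator gives the quadratic $t^2 - (2+\phi)t + 1 = 0$, whose discriminant is $(2+\phi)^2 - 4 = \phi^2 + 4\phi$ and whose roots are $t = \tfrac{1}{2}\bigl(2 + \phi \pm \sqrt{\phi^2 + 4\phi}\bigr)$. Since the product of the roots equals $1$ and their sum $2 + \phi$ exceeds $2$, exactly one root lies in $(0,1)$ and the other in $(1,\infty)$; requiring the admissible timestep $t \in (0,1]$ forces the minus sign, yielding the claimed $t = \tfrac{1}{2}\bigl(2 + \phi - \sqrt{\phi^2 + 4\phi}\bigr)$ as in \eqref{t-calculation-1}.

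I expect the only genuinely delicate points to be bookkeeping rather than depth: keeping the normalization of the two noise sources (the channel noise $\bm{n}$ and the diffusion noise $\bm{\epsilon}$) consistent so that $\phi$ emerges in precisely the stated form, and justifying the root selection so that the returned timestep is physically meaningful. The latter is reassuring to check directly, since a larger received SNR (larger $\phi$) should map to a smaller $t$, which is indeed the monotone behavior of the minus-root; the remaining manipulations are routine quadratic algebra.
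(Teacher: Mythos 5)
Your proposal is correct and follows essentially the same route as the paper's proof: equate the observed SNR $(\mathbb{E}[\|\bm{y}\|^2]-\sigma^2)/\sigma^2$ with the forward-variable ratio $(1-t)^2\gamma/t$, reduce to the quadratic $t^2-(2+\phi)t+1=0$, and select the minus root on physical grounds. Your root-selection argument via the product of roots equaling $1$ is a slightly cleaner justification than the paper's direct observation that $t_{+}>1$, but the substance is identical.
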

\begin{proof}
Because $\bm{z}$ and $\bm{n}$ are uncorrelated, thus $\mathbb{E}[\|\bm{y}\|_{2}^{2}]= \mathbb{E}[\|\eta\bm{z}+\bm{n}\|_{2}^{2}]=\mathbb{E}\!\left[\lVert\eta\bm{z}\rVert^{2}\right]+\sigma^{2}$. Therefore, the following equation can be obtained.
\begin{align}
    \mathrm{SNR}_{\mathrm{obs}}=\frac{\mathbb{E}\!\left[\|\eta\bm{z}\|_{2}^{2}\right]}{\sigma^{2}}
      =\frac{\mathbb{E}[\|\bm{y}\|_{2}^{2}]-\sigma^{2}}{\sigma^{2}}\label{snr-trans}
\end{align}
The purpose of transforming \eqref{snr-trans} is because, for the receiver, it cannot know the $\|\eta\bm{z}\|_{2}^{2}$ for a specific $\bm{z}$, since it doesn't the $\bm{z}$, but can only obtain the power of $\|\bm{y}\|_{2}^{2}$ and $\|\bm{n}\|_{2}^{2}$ through measurement. Therefore, it needs to calculate SNR through the above transformation. By introducing $\phi\triangleq \mathrm{SNR}_{\mathrm{obs}}/\gamma >0$, the matching condition of \eqref{t-calculation} can be obtained as follows.
\begin{align}
    (1-t)^{2}=\phi t.
\end{align}
Expanding yields the following equation.
\begin{align}
    t^{2}-(2+\phi)t+1=0.
\end{align}
The discriminant is $(2+\phi)^{2}-4=\phi^{2}+4\phi>0,$ ensuring two real roots as follows.
\begin{align}
    t_{\pm}=\frac{2+\phi\pm\sqrt{\phi^{2}+4\phi}}{2}.
\end{align}
Because $\phi>0$, the ``plus" root satisfies $t_{+}>1$ and violates the physical constraint $t<1$; consequently it is rejected.  The “minus’’ root can be obtained as follows.
\begin{align}
    t^{\star}=t_{-}=\frac{2+\phi-\sqrt{\phi^{2}+4\phi}}{2}.
\end{align}
\end{proof}

\begin{remark}\label{remark-2-1}
    According to Theorem~\ref{theorem-1}, consider the mapping $g(t;\phi)\triangleq (1-t)^{2}-\phi t,\qquad t\in[0,1].$ Then the following properties hold.
    (1) For every $\phi\ge0$ the equation $g(t;\phi)=0$ admits a unique solution; (2) The function $t^{\star}(\phi)$ is strictly decreasing in $\phi$; equivalently, $t^{\star}$ increases as $\mathrm{SNR}_{\mathrm{obs}}$ decreases; (3) When $SNR\rightarrow 0$, $t=1$ which equals to denoise from a total noise according to \eqref{ddm-forward}, and when $SNR\rightarrow +\infty$, $t=0$ which means no denoise is needed.
\end{remark}
\begin{proof}
    Observing that $g(0;\phi)=1>0$ and $g(1;\phi)=-(\phi)\le 0$ for every $\phi\ge0$, and noting the equation as follows.
    \begin{align}
        &\frac{\partial g}{\partial t}=2(t-1)-\phi\\ 
        &\frac{\partial^{2} g}{\partial t^{2}}=2>0, 
    \end{align}
    the $g(\cdot;\phi)$ is strictly convex on $[0,1]$.  A strictly convex function that changes sign exactly once on a closed interval possesses one and only one root; hence, the solution exists and is unique in $(0,1]$.  Solving $g(t;\phi)=0$ yields the explicit closed‐form root displayed above, whose radicand $\phi^{2}+4\phi=\phi(\phi+4)$ is non-negative for all $\phi\ge0$; consequently the square root is real and non-negative, ensuring $t^{\star}$ is well defined.

    To establish monotonicity, differentiate $t^{\star}$ with respect to $\phi$ is as follows.
    \begin{align}
        \frac{d t^{\star}}{d\phi}
          =\frac{1-\frac{1}{2}\bigl(2\phi+4\bigr)/\sqrt{\phi^{2}+4\phi}}
                 {2}
          <0,\\
          \forall\phi>0,
    \end{align}
    because $2\phi+4>0$ and $\sqrt{\phi^{2}+4\phi}>\phi$.  Thus $t^{\star}$ decreases strictly with increasing $\phi$. Finally, the limiting values follow directly from L’Hospital’s rule applied to the closed‐form expression \cite{zorich2016mathematical}. As $\mathrm{SNR}_{\mathrm{obs}}\rightarrow\infty$ we have $\phi\rightarrow\infty$ and therefore $t^{\star}\sim 1/\phi\rightarrow 0$, indicating that an infinitely clean observation requires no diffusion denoising.  Conversely, when $\mathrm{SNR}_{\mathrm{obs}}\rightarrow 0$ we obtain $\phi\rightarrow0$ and $t^{\star}\rightarrow 1$, meaning the denoiser must start from pure noise because the observation contains no discernible signal component.
\end{proof}

\begin{remark}\label{remark-2}
    In the training procedure, since the purpose of denoise is to solve $\bm{z}$, thus if the channel fading factor $\eta$ can be obtained through some channel estimation methods, the $\bm{x}_{0}$ is set equal to $\bm{z}$. Therefore, the $t$ can be simplified as follows.
    \begin{align}
        t = \frac{2\sigma^2+\eta-\sqrt{\eta^2+4\eta^2\sigma^2}}{2\sigma^2}.\label{calculate-t}
    \end{align}
\end{remark}
The absence of an explicit $\bm{z}$ term in Remark~\ref{remark-2} follows directly from the standard training protocol of diffusion models: during pre-training, the latent signal energy is normalised to a fixed range, which is equivalent to assuming a constant transmit power. Under this normalisation, the instantaneous SNR is governed solely by the additive-noise variance $\sigma^{2}$. Consequently, the closed-form expression in  Remark~\ref{remark-2} depends only on $\sigma^{2}$ and no longer involves $\bm{z}$, yielding a simplified yet general mapping between the channel noise level and the optimal denoising timestep.

\begin{figure}
    \centering
    \includegraphics[width=1\linewidth]{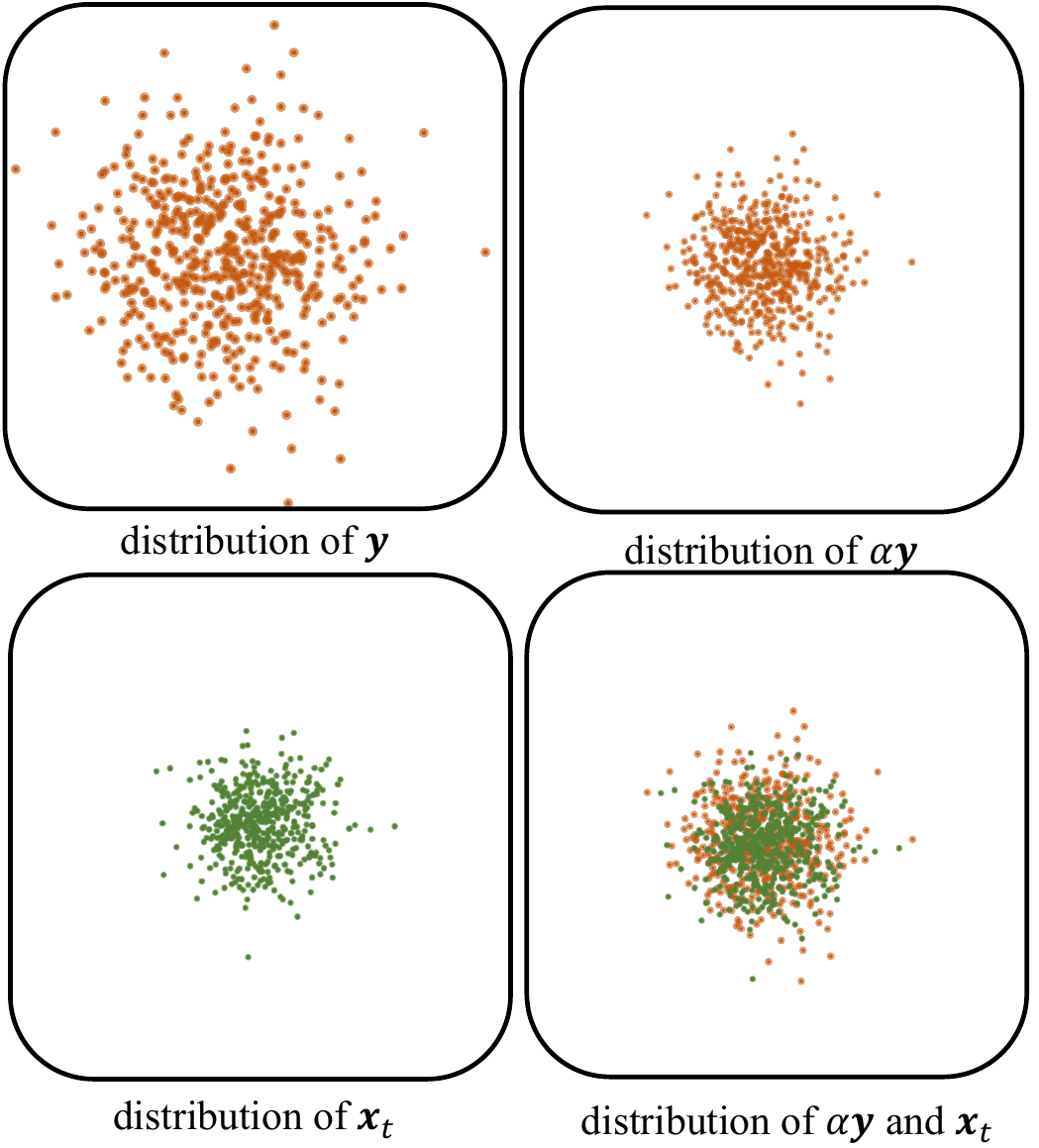}
    \caption{Illustration of the scaling method.}
    \label{fig-alpha-calculation}
    \vspace{-12pt}
\end{figure}
Theorem~\ref{theorem-1} guarantees the existence of a timestep $t\in(0,1]$ for which the signal–noise ratio embedded in the diffusion variable $\bm{x}_{t}$ exactly matches that of the received vector $\bm{y}$.  However, during the training procedure, the DM is exposed only to pairs $(\bm{x}_{0},\bm{x}_{t})$ generated via the forward process in \eqref{ddm-forward}. Consequently, the network learns the statistics of $\bm{x}_{t}$ rather than those of the true channel output.  If $\bm{y}$ is fed directly into the DM, the mismatch in input distribution leads to an OOD condition and a marked degradation in denoising accuracy, as is shown in Fig.~\ref{fig-alpha-calculation}. In wireless reception, this discrepancy can be reconciled by a simple power-normalisation step. Because both $\bm{x}_{t}$ and $\bm{y}$ are linear combinations of a deterministic term $\bm{z}$ and zero-mean Gaussian noise, their distributional forms are identical up to second-order moments.  Letting $\bm{x}_{0}=\bm{z}$, we equalise the average energies of $\bm{y}$ and $\bm{x}_{t}$ through a scalar factor $\alpha$, i.e., $\bm{y}'=\alpha\bm{y}$.  Selecting $\alpha$ according to Theorem 2 aligns the second-order statistics of $\bm{y}'$ with those of $\bm{x}_{t}$, thereby ensuring that the pretrained DM operates within its learned distribution.  This linear scaling not only restores denoising performance but also guarantees that the recovered latent vector coincides with the desired signal component $\bm{z}$ without further post-processing.

\begin{theorem}\label{theorem-2}
   Define the scaled observation $\tilde{\bm y}=\alpha\bm y$ with scalar $\alpha>0$.  The equality
\begin{align*}
    \mathbb E\!\bigl[\lVert\tilde{\bm y}\rVert^{2}\bigr]
     =\mathbb E\!\bigl[\lVert\bm x_{t}\rVert^{2}\bigr],
\end{align*}
holds if and only if
\begin{align}
    \;
\alpha
      =\sqrt{\frac{(1-t)^{2}(\mathbb{E}[\|\bm{x}_{0}\|_{2}^{2}])+t}{\mathbb{E}[\|\bm{y}\|_{2}^{2}]}}.
\end{align}
Moreover, $0<\alpha\le1$; $\alpha$ decreases strictly with $t$\footnote{Since $\bm{x}_{0}$ corresponds to the label in the DM training process, its distribution can be obtained.}.
\end{theorem}
\begin{proof}
The expected value of $\mathbb{E}\left[\|\alpha \bm{y}\|^{2}\right]$ is as follows.
    \begin{align*}
        \mathbb{E}\left[\|\alpha \bm{y}\|^{2}\right]&=\mathbb{E}\left[\|\alpha (\eta\bm{z}+\bm{n})\|^{2}\right],\\
        &=\alpha^2 \left(\mathbb{E}\left[\| \bm{s}\|^{2}\right]+\mathbb{E}\left[\|\bm{n}\|^{2}\right]\right).
    \end{align*}
Moreover, the expected value of $\mathbb{E}\left[\|\bm{x}_{t}\|^{2}\right]$ is as follows.
\begin{align*}
    \mathbb{E}\left[\|\bm{x}_{t}\|^{2}\right]&=\mathbb{E}\left[\|(1-t)\bm{x}_{0}+\sqrt{t}\bm{\epsilon}\|^{2}\right],\\
    &=(1-t)^{2}\mathbb{E}\left[\|\bm{x}_{0}\|\right]+t\mathbb{E}\left[\|\bm{\epsilon}\|^{2}\right],\\
    &=(1-t)^{2}\mathbb{E}\left[\|\bm{x}_{0}\|\right] +t
\end{align*}
Then, by solving the following equation, the value of $\alpha$ can be obtained as Theorem~\ref{theorem-2}.
\begin{align*}
    \mathbb{E}\left[\|\bm{x}_{t}\|^{2}\right]= \mathbb{E}\left[\|\alpha \bm{y}\|^{2}\right].
\end{align*}
Taking the principal square root preserves positivity and furnishes the closed form for $\alpha$.  Positivity follows from $t\in(0,1)$ and $\mathbb{E}[\|\bm{y}\|_{2}^{2}]>\sigma^{2}$.  To bound $\alpha$ by unity, observe that $(1-t)^{2}(\mathbb{E}[\|\bm{y}\|_{2}^{2}]-\sigma^{2})+t\le \mathbb{E}[\|\bm{y}\|_{2}^{2}]$, because $(1-t)^{2}\le1-t$ on $(0,1)$ and $t<1$. Strict decrease in $t$ is evident from the negative derivative of the numerator with respect to $t$.
\end{proof}

A direct consequence of Theorem~\ref{theorem-1} and Theorem~\ref{theorem-2} is that the properly rescaled observation $\tilde{\bm y}=\alpha\bm y$ attains the same second-order distribution as the synthetic sample $\bm x_{t}$ employed during diffusion training.  Because the clean component of the channel output equals the training data pairs as $\bm{z}=\bm{x}_{0}$, the only mismatch between the two random vectors is the pair of linear coefficients that multiply the signal term and the additive Gaussian term, respectively.  Selecting the step size $t$ via Theorem~\ref{theorem-1} ensures that the SNR satisfies $(1-t)^{2}\gamma/t=\gamma/\sigma^{2}$; the scaling factor derived in Theorem~\ref{theorem-2} ensure the $\mathbb{E}\left[\alpha\bm{y}\right]=\mathbb{E}\left[(1-t)\bm x_{0}+\sqrt t\bm{\epsilon}\right]$ with $\bm{\epsilon}\sim\mathcal N(\bm 0,\bm I)$. Therefore, the rescaled received vector replicates exactly the affine stochastic structure of $\bm x_{t}$.  Although $\gamma=\mathbb E[\lVert\bm x_{0}\rVert^{2}]$ appears in both theorems, it is known at training time because it can be calculated through the whole training data.

\begin{figure*}
    \centering
    \includegraphics[width=1\linewidth]{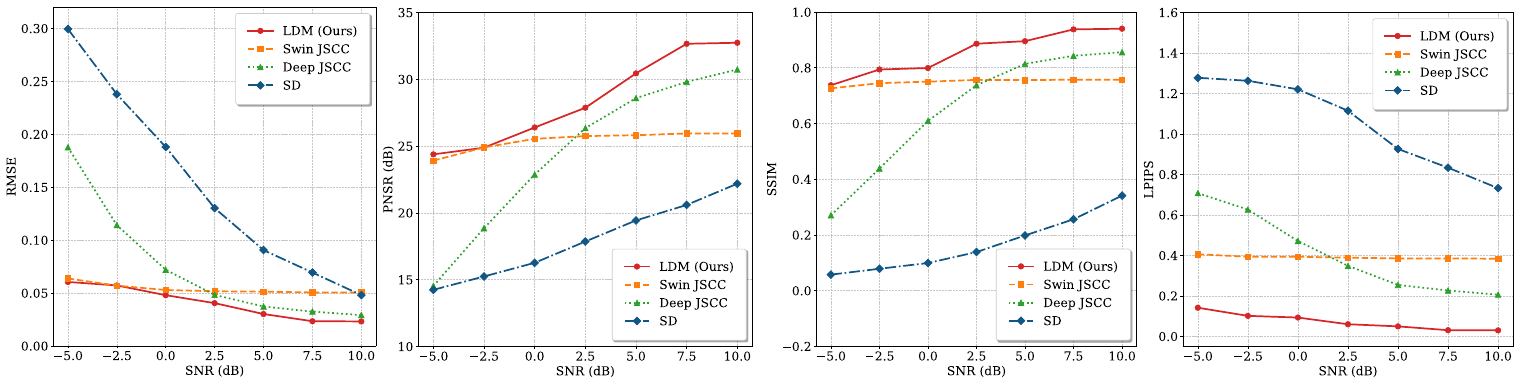}
    \caption{Performance evaluation of different methods on various SNRs in the CelebA-HQ datasets.}
    \label{fig-celeba}
\end{figure*}

\begin{figure*}[t]
    \centering
    \includegraphics[width=0.8\linewidth]{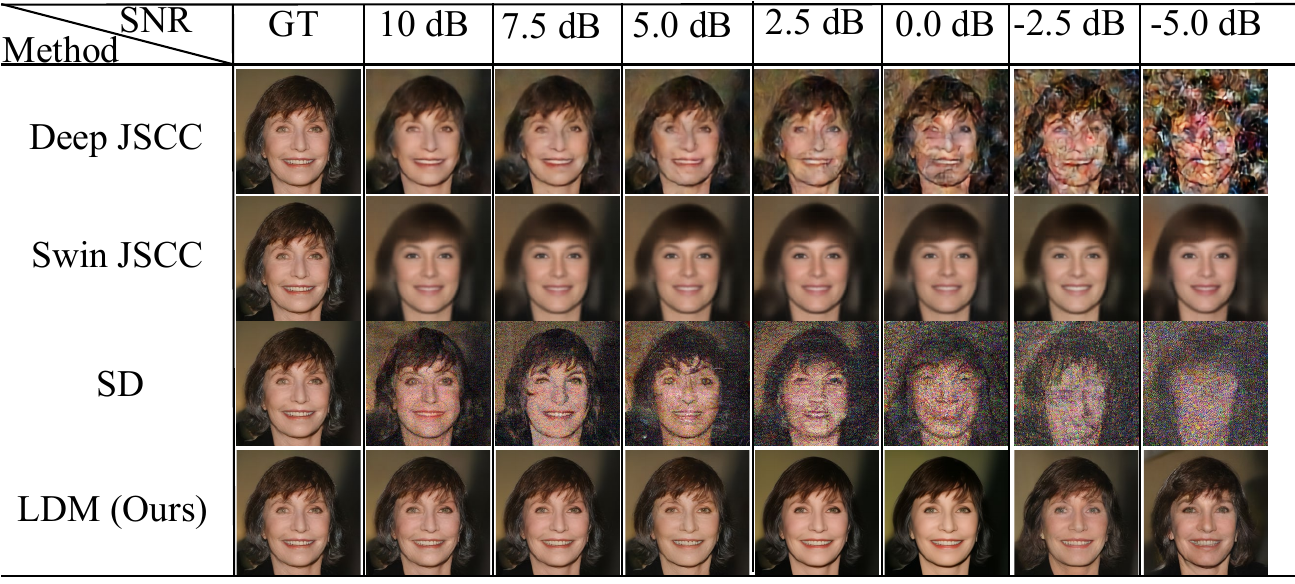}
    \caption{Comparison of image transmission performance at different SNR.}
    \label{fig-demo-celeba}
\end{figure*}
\subsection{LDM Based Sematic Communciation Framework}
To enable robust semantic communication over noisy channels, we propose an enhanced receiver architecture that integrates a pretrained LDM for adaptive denoising within the latent space, as shown in Fig.~\ref{fig-system}. Building upon the system model described earlier, the key novelty lies in the use of a generative denoising process that is aware of SNR and dynamically adapts to the characteristics of the received latent representation.

Given the received latent vector $\tilde{\bm{z}} = \eta\bm{z} + \bm{n}$, where $\bm{z}$ denotes the transmitted latent semantic representation and $\bm{n} \sim \mathcal{N}(0, \sigma^2 \bm{I})$ represents channel noise, the receiver first estimates the instantaneous SNR. This estimate is used to analytically determine a denoising timestep $t \in (0,1]$ according to Theorem~\ref{theorem-1}, which ensures that the signal–noise energy ratio in the diffusion variable matches that of the channel output. Simultaneously, a linear scaling factor $\alpha$ is computed via Theorem~\ref{theorem-2} to adjust the received vector's magnitude, aligning its distribution with that of the diffusion model's training inputs. This transformation yields the input to the LDM as $\bm{x}_{t}=\alpha \bm{y}$
The LDM then performs reverse diffusion starting from $\alpha\bm{\bm{y}}$, using the analytically determined timestep $t$ from Theorem~\ref{theorem-1}. Different from conventional denoisers, which are often retrained or fine-tuned under new noise conditions, the LDM leverages its learned generative prior to adapt across a wide range of channel perturbations without modification. The reverse process can be executed with a variable number of iterations, depending on the SNR: fewer steps under high-quality channels and more when the received signal is severely corrupted. In particular, the framework supports single-step reverse inference using the approximated expression in \eqref{ddm-reverse}, which enables low-latency operation when appropriate. Once denoised, the output $\hat{\bm{z}}$ is forwarded to the decoder, which reconstructs the semantic data. Since the LDM operates in latent space, its computational burden is low compared to pixel- or token-level generative models. Furthermore, the use of pretrained LDMs decouples the denoising mechanism from the encoder–decoder training, enabling modular updates. That is, more powerful LDMs—trained on large-scale data or advanced architectures—can be integrated directly, enhancing performance without altering the underlying semantic transceiver structure. This modular, SNR-aware framework allows the semantic communication system to maintain robustness against varying channel conditions while retaining the scalability and efficiency of a latent-space representation. It overcomes the limitations of conventional discriminative models, which suffer from generalization gaps when encountering unseen noise levels or out-of-distribution inputs, by grounding the denoising process in the theoretical structure of stochastic differential equations and distribution alignment.

\section{Experiment Results}
\begin{figure*}
    \centering
    \includegraphics[width=0.8\linewidth]{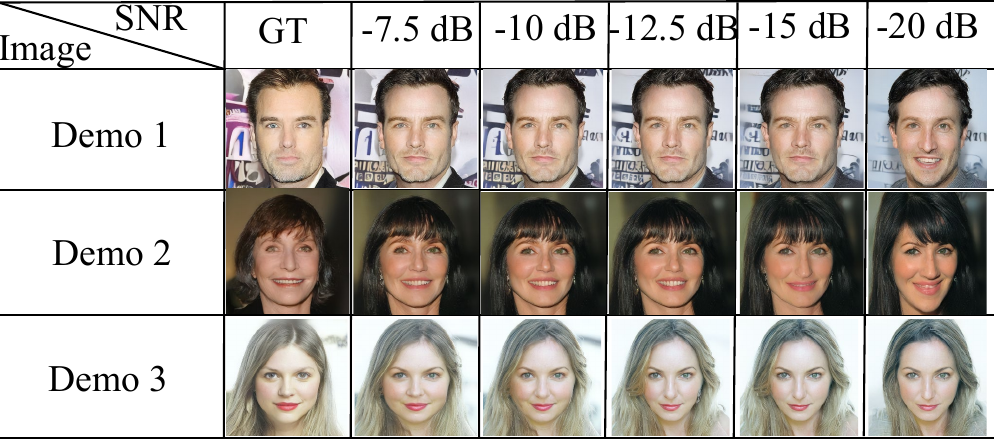}
    \caption{The image transmission performance of the proposed method on low SNRs.}
    \label{fig-low-snr}
\end{figure*}
\subsection{Datasets and Evaluation Metrics}
To evaluate the effectiveness of the proposed LDM-based semantic communication framework, we conducted extensive experiments on the CelebA-HQ dataset, a large-scale, high-resolution facial image corpus widely used for image generation and reconstruction tasks. As baseline comparisons, we selected two representative deep joint source-channel coding (JSCC) methods as follows. \textbf{Deep JSCC} \cite{bourtsoulatze2019deep}, a classical convolutional architecture for image transmission, and \textbf{Swin JSCC} \cite{yang2024swinjscc}, a state-of-the-art transformer-based semantic communication model known for its superior performance on vision tasks. In addition, we included \textbf{stable diffusion (SD)} \cite{LDM} as a generative baseline to assess the denoising capability of pretrained diffusion models in a semantic communication context. Specifically, the VAE in SD was utilized to perform joint source-channel encoding, analogous to the structure of the proposed LDM-based architecture. The corrupted latent features resulting from channel transmission were used as conditional inputs to the SD model, accompanied by a fixed textual prompt: “generate a clean and noise-free latent feature based on the content of this latent feature.” This prompt guided the model to perform direct semantic denoising using its pretrained generative prior. For our framework, we employed a pretrained latent diffusion model trained jointly on the CelebA-HQ \cite{karras2017progressive} and ImageNet \cite{deng2009imagenet} datasets, without any task-specific fine-tuning or post-training being applied, thereby highlighting the generalizability and zero-shot capability of the proposed architecture. To comprehensively assess reconstruction quality, we adopted both pixel-level and semantic-level evaluation metrics. Quantitatively, root mean squared error (RMSE) was used to measure pixel-wise distortion. Additionally, we evaluated semantic preservation and perceptual quality using peak signal-to-noise ratio (PSNR), structural similarity index measure (SSIM) \cite{1284395}, and learned perceptual image patch similarity (LPIPS) \cite{zhang2018unreasonable}. These metrics collectively reflect the system's fidelity in both low-level accuracy and high-level semantic consistency.

\subsection{Performance Comparison on CelebA-HQ}
The qualitative and quantitative performance of the proposed LDM-based semantic communication framework is illustrated in Fig.~\ref{fig-celeba}-\ref{fig-low-snr}. Fig.~\ref{fig-demo-celeba} presents visual comparisons of image reconstruction outcomes under varying SNR conditions. It is evident that the proposed method achieves superior restoration quality, even under severely degraded channel conditions. Notably, without any task-specific fine-tuning, the LDM-based approach consistently preserves semantic structure and fine-grained details across a broad SNR range. In contrast, the two baseline deep JSCC methods demonstrate significant degradation under low SNRs, manifesting in blurry textures and distorted geometries. While the Swin-JSCC architecture performs better at low SNRs and can generate relatively clean outputs, its reconstructions tend to be overly smooth, with visibly diminished detail fidelity. This smoothness suggests a loss of high-frequency semantic features, likely due to the reliance on discriminative learning mechanisms, which are vulnerable to OOD channel perturbations and lack the capacity for generative recovery of corrupted latent semantics. We also evaluated a SD-based approach, where the received noisy latent features were used as conditional inputs alongside a denoising prompt. Despite this prompt-driven guidance, the SD model failed to produce competitive results, primarily because it was never explicitly trained to perform conditional denoising in the latent space. As a result, the generated outputs exhibited structural inconsistencies and semantic drift. The effectiveness of the LDM-based approach is further validated in Fig.~\ref{fig-celeba}, which summarizes numerical performance across a wide range of SNRs. The proposed method outperforms all baselines in both pixel-level fidelity, which is measured by RMSE, and semantic similarity metrics, including PSNR, SSIM, and LPIPS. Of particular note is the system’s behavior in low-SNR regimes, such as SNR$< 0$dB, the degradation trends of all evaluation metrics flatten considerably for the LDM-based framework, indicating strong resilience to severe channel noise. Fig.~\ref{fig-low-snr} further substantiates this robustness by showcasing restored outputs at -10 dB and -20dB SNRs. Even at -10dB, the proposed system preserves most visual and structural semantics, and at -20dB, although fine-grained object details may be lost, high-level semantic attributes—such as facial orientation and gender identity—remain correctly reconstructed. These results highlight the LDM’s unique ability to capture and restore abstract semantic content under extreme channel impairments, demonstrating its advantage over both discriminative and conditional generative baselines.

\begin{figure*}[t]
    \centering
    \subfigure[Performance demonstration on OOD data of landscape.]
    {
       \centering
       \includegraphics[width=0.62\linewidth]{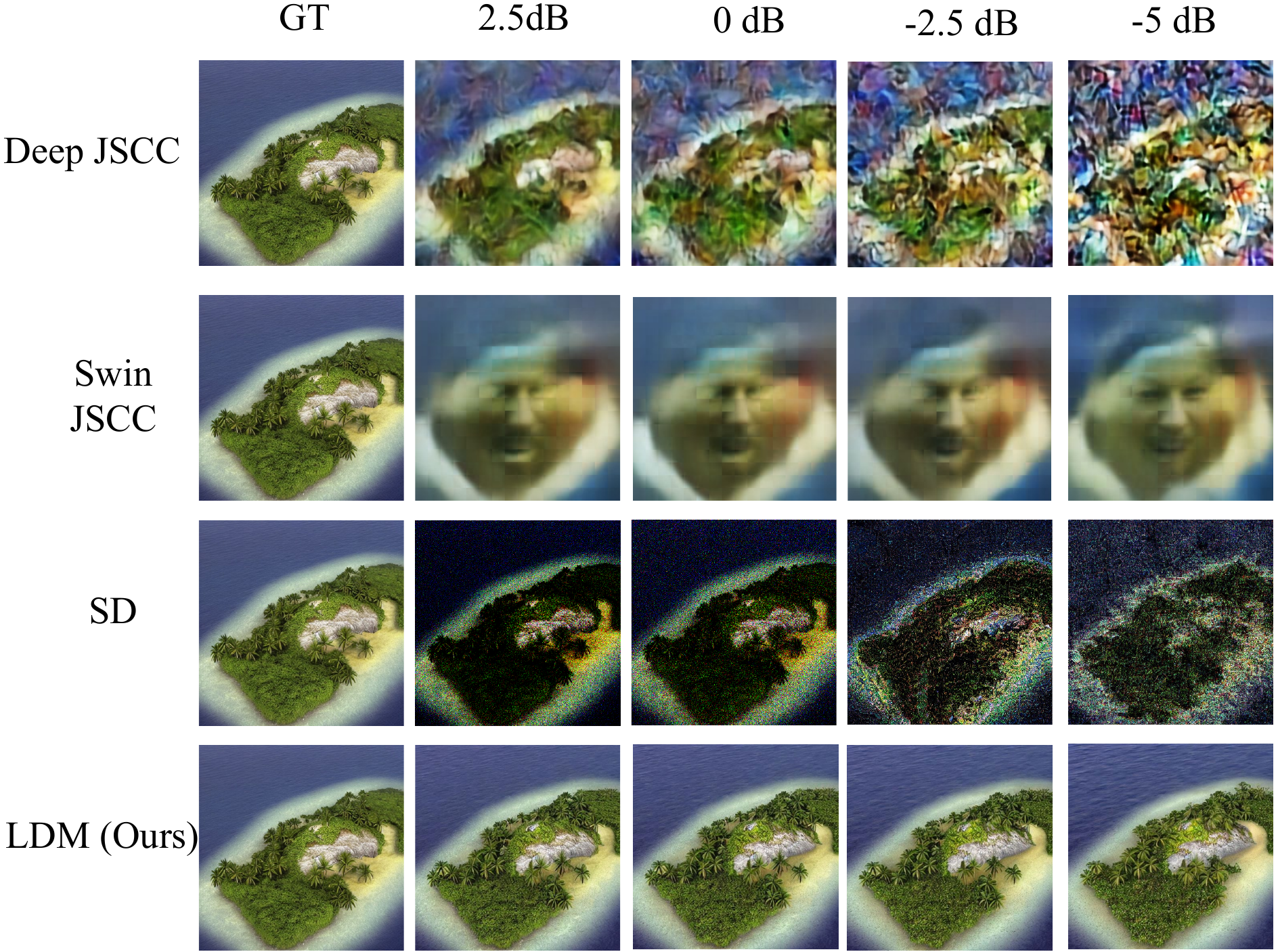}\label{fig-ood-island}
    }
    \subfigure[Performance demonstration on OOD data of animal.]
    {
       \centering
       \includegraphics[width=0.62\linewidth]{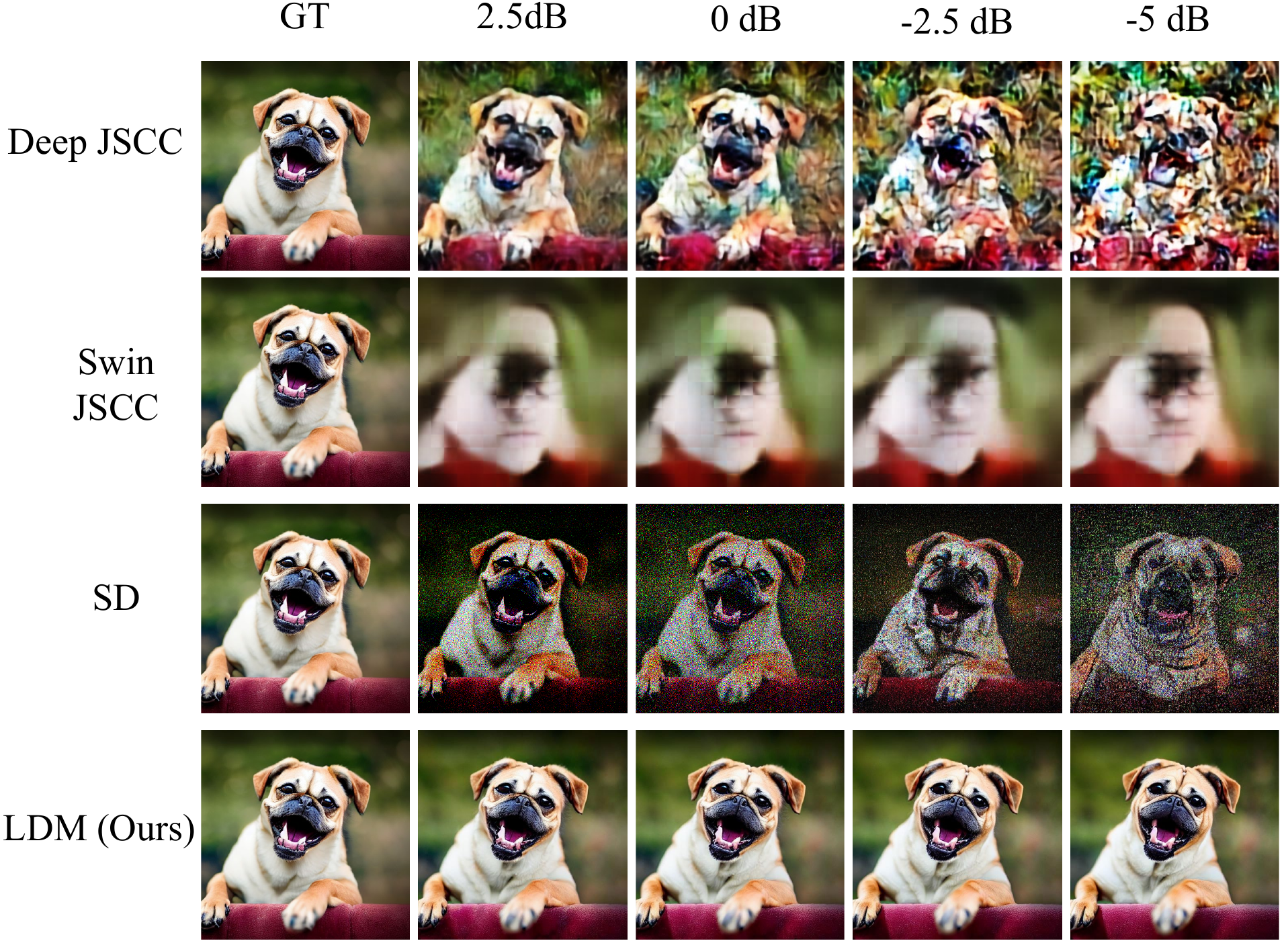}\label{fig-ood-dog}
    }
    \caption{Performance demonstration on OOD data.}
    \label{fig-ood}
\end{figure*}
\subsection{Performance Demonstration of OOD Data}
In addition to the semantic distortion caused by additive noise in the communication channel, semantic communication systems also face a more fundamental and often overlooked challenge: OOD generalization with respect to the transmitted data itself. While the majority of existing semantic communication research focuses on the robustness of latent feature recovery under noise, the problem of distributional shift between training data and real-world test data has received comparatively limited attention. In practice, semantic communication systems are often trained on specific datasets using supervised end-to-end learning pipelines, such as those adopted in classical Deep JSCC \cite{bourtsoulatze2019deep} and Swin-JSCC \cite{yang2024swinjscc} frameworks. Although these systems may achieve excellent reconstruction performance on in-distribution samples, their ability to generalize across diverse data distributions remains highly constrained. To empirically validate this limitation, we follow standard training protocols and train several semantic transceivers, including Deep JSCC and Swin JSCC, on the CelebA-HQ dataset—a high-resolution portrait dataset. We then evaluate their performance on semantically unrelated image categories, including natural landscapes and animal scenes. As illustrated in Fig.~\ref{fig-ood} Deep JSCC exhibits significant degradation under OOD conditions, particularly when the channel SNR drops below 0dB. The reconstructed outputs suffer from both structural distortions and semantic ambiguity. Although Swin JSCC demonstrates improved low-SNR performance, it exhibits signs of overfitting to the portrait domain: when applied to landscape or animal images, the outputs are dominated by unnatural human-like textures, regardless of the original content. This over-specialization severely limits its applicability to real-world communication scenarios, where data distributions are often nonstationary and diverse. Interestingly, we observe that the SD model, though not optimized for end-to-end transmission, is still capable of preserving coarse semantic information, such as global shapes and object silhouettes, across different content types. This behavior is attributed to its training on large-scale, heterogeneous datasets such as LAION and ImageNet, which enables the model to learn a generalizable representation of natural images. However, due to the lack of explicit training for channel noise suppression, the SD model struggles to recover detailed features or suppress semantic corruption introduced by the channel. 
\begin{figure*}
    \centering
    \includegraphics[width=1\linewidth]{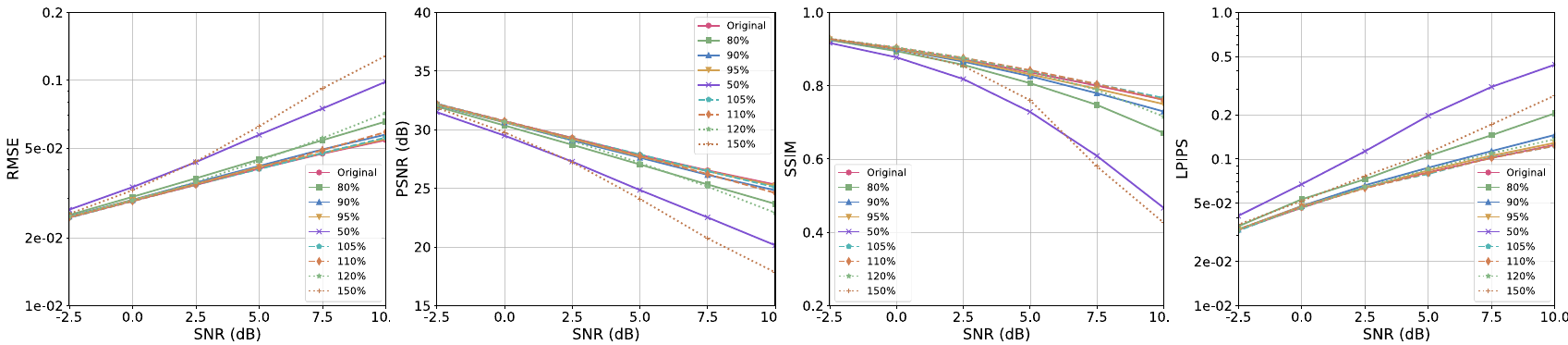}
    \caption{Performance evaluation on different $t$.}
    \label{fig-t}
\end{figure*}
\begin{figure*}
    \centering
    \includegraphics[width=1\linewidth]{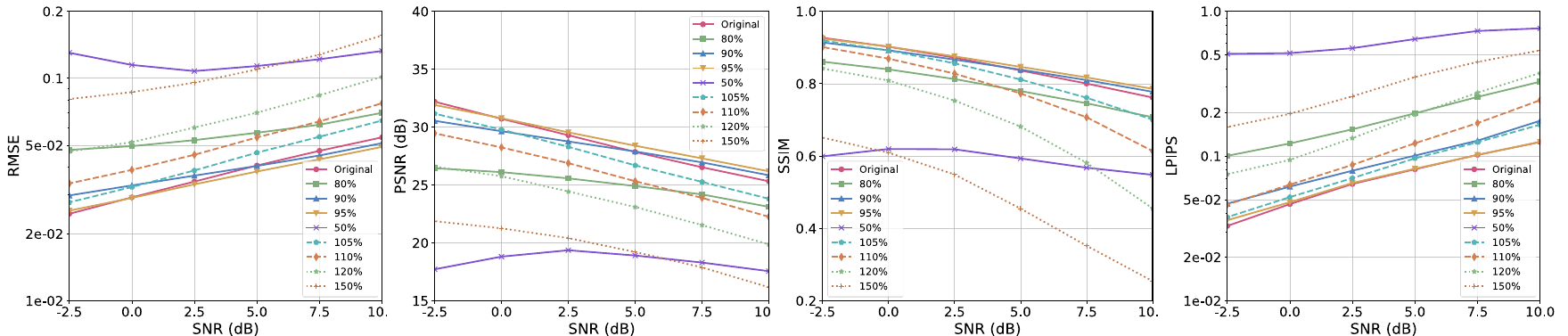}
    \caption{Performance evaluation on different $\alpha$.}
    \label{fig-alpha}
\end{figure*}



In contrast, the proposed LDM-based semantic communication framework directly addresses both noise robustness and data distribution generalization. By design, it decouples source-channel encoding from denoising and reconstruction, employing a pretrained LDM as a universal noise suppressor. A key advantage of our architecture is its modularity: the LDM component can be readily replaced with publicly available weights trained on large-scale datasets such as ImageNet or OpenImages. Without any task-specific fine-tuning, the system is capable of adapting to a wide range of visual domains. This makes it particularly well-suited for deployment in open-world communication environments where training data may not reflect future transmission requirements. Furthermore, because the LDM operates in the latent space of a VAE encoder, it preserves compactness and scalability while still benefiting from the rich generative priors of large-scale diffusion models. This plug-and-play compatibility with open-source generative backbones imbues the system with a self-evolutionary capacity—as more powerful generative models become available, they can be directly integrated into the communication pipeline to enhance performance without retraining the encoder or decoder. In summary, the proposed framework not only resolves the vulnerability of semantic communication systems to channel-induced feature corruption, but also significantly improves cross-domain generalization through its integration with generative artificial intelligence. The ability to support robust semantic reconstruction under both SNR degradation and distributional shift underscores the practical utility and scalability of our approach for next-generation communication systems.

\subsection{Numerical Proof of Theorem \ref{theorem-1} and \ref{theorem-2}}
To substantiate the theoretical claims made in Theorems \ref{theorem-1} and \ref{theorem-2} regarding the optimal denoising timestep $t$ and scaling factor $\alpha$, we perform a comprehensive numerical evaluation to assess their impact on end-to-end semantic reconstruction performance. Specifically, we aim to verify whether the closed-form expressions derived for $t$ and $\alpha$, which are analytically linked to the SNR and data distribution statistics, indeed correspond to optimal operational points in practice. In this experiment, we compare the performance of our LDM-based semantic communication framework under four different evaluation metrics: RMSE, peak PSNR, SSIM, and LPIPS. We systematically perturb the analytically computed values of $t$ and $\alpha$ by $\pm$5\%, $\pm$10\%, $\pm$20\%, and $\pm$50\% to simulate scenarios where the SNR is inaccurately estimated or where the parameter selection deviates from the ideal theoretical value. These perturbations are intended to reflect practical conditions, where perfect knowledge of channel statistics may not be available.

Fig.~\ref{fig-t} illustrates the impact of these deviations in $t$ on system performance. Across all metrics, the best results are consistently achieved when using the theoretically derived value of $t$. As the deviation increases, a clear monotonic degradation in performance is observed. This empirical behavior aligns with our theoretical expectation: the denoising performance of the diffusion model is highly sensitive to the matching between the true noise content and the chosen diffusion timestep. The further the assumed timestep diverges from the one that corresponds to the actual SNR, the more the model either overestimates or underestimates the noise level, leading to under-denoising or semantic oversmoothing, respectively. An additional and noteworthy observation is that when $t$ is perturbed within a small margin, such as $\pm$5\%, the system performance remains relatively stable across all indicators. This demonstrates the robustness of the proposed parameter selection mechanism, suggesting that our framework does not require highly precise SNR estimation to function effectively. Instead, a coarse approximation of the channel condition is sufficient for achieving near-optimal denoising behavior—an important practical advantage in dynamic or resource-constrained communication scenarios.

Fig.~\ref{fig-alpha} reports similar findings for the scaling factor $\alpha$. Again, the analytically computed value achieves the best or near-best performance across the board, validating the correctness of our theoretical derivation. The advantage of using the exact value is especially pronounced in low-SNR regimes, such as  SNR$< -2.5$dB, where even minor inaccuracies in scaling lead to severe mismatches between the input distribution of the denoiser and its training distribution. In contrast, the theoretically derived scaling factor enables optimal distribution alignment and preserves denoising fidelity. These results confirm the critical role of precise distribution matching in the success of generative denoising models under severe noise conditions. In summary, the experimental results provide strong numerical evidence supporting the correctness and effectiveness of the closed-form expressions for $t$ and $\alpha$ proposed in Theorems \ref{theorem-1} and \ref{theorem-2}. Moreover, the demonstrated performance robustness to small deviations further highlights the practicality and resilience of our approach in real-world communication systems where estimation errors are inevitable. This validates the proposed framework not only from a theoretical standpoint but also from a system design and deployment perspective.

\section{Conclusion}
In this work, we have proposed a novel semantic communication framework based on LDMs and have established a rigorous theoretical foundation grounded in stochastic differential equations to guide the denoising process. We have further derived closed-form solutions for the optimal diffusion timestep and input scaling factor, enabling robust semantic reconstruction without requiring model fine-tuning or retraining. By leveraging pretrained generative models and adapting them through analytical mappings to the wireless channel conditions, the proposed method has demonstrated strong generalization, noise resilience, and compatibility with diverse data distributions, making it well-suited for practical deployment in future wireless communication systems. In future work, we will explore the extension of our framework to multi-modal semantic communication tasks, incorporate adaptive diffusion control for real-time applications, and investigate joint training strategies to further optimize end-to-end performance under dynamic and multi-user network environments.

\bibliography{ref}
\bibliographystyle{IEEEtran}
\ifCLASSOPTIONcaptionsoff
  \newpage
\fi
\end{document}